\newtheorem{theorem}{Theorem}
\newtheorem{lemma}{Lemma}
\newtheorem{statement}{Statement}
\title{Safe Interval Path Planning With Kinodynamic Constraints}
\author {
    Zain Alabedeen Ali,\textsuperscript{\rm 1}
    Konstatin Yakovlev \textsuperscript{\rm 2, 3, 1}
}
\begin{document}

\maketitle

\begin{abstract}
Safe Interval Path Planning (SIPP) is a powerful algorithm for solving single-agent pathfinding problem when the agent is confined to a graph and certain vertices/edges of this graph are blocked at certain time intervals due to dynamic obstacles that populate the environment. Original SIPP algorithm relies on the assumption that the agent is able to stop instantaneously. However, this assumption often does not hold in practice, e.g. a mobile robot moving with a cruising speed is not able to stop immediately but rather requires gradual deceleration to a full stop that takes time. In other words, the robot is subject to kinodynamic constraints. Unfortunately, as we show in this work, in such a case original SIPP is incomplete. To this end, we introduce a novel variant of SIPP that is provably complete and optimal for planning with acceleration/deceleration. In the experimental evaluation we show that the key property of the original SIPP still holds for the modified version -- it performs much less expansions compared to A* and, as a result, is notably faster. 
\end{abstract}

\section{Introduction}

\begin{figure}[t]
  \centering
\includegraphics[width=0.95\linewidth]{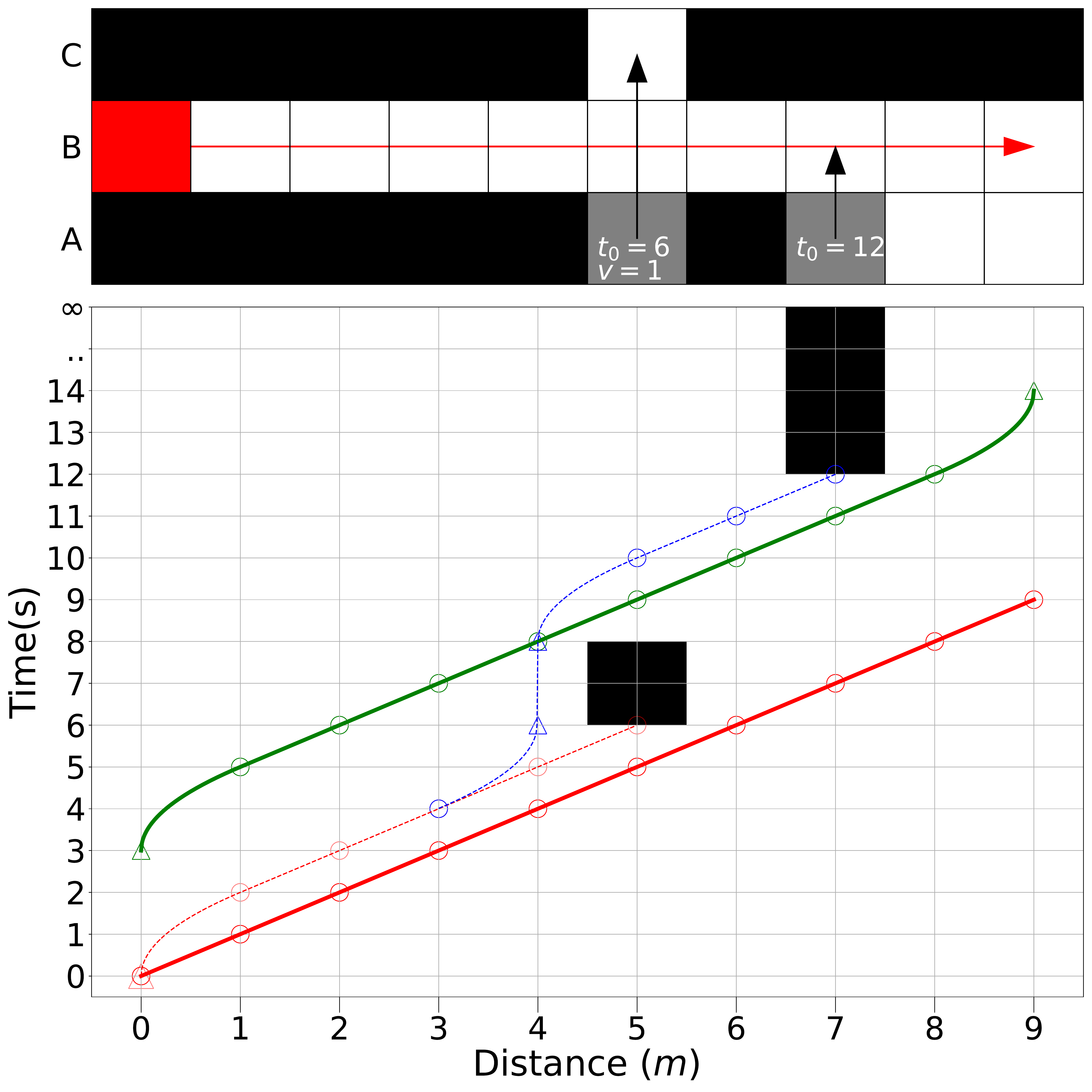}
\caption{
An example of the challenging path planning instance with dynamic obstacles.
}
\label{fig:visual-abstract}
\end{figure}

Planning a path in the presence of both static and moving obstacles is a challenging problem with topical applications in robotics, video-games and other domains. 
When the environment is fully observable and the trajectories of the dynamic obstacles are known (e.g., are predicted by the robot's perception system or by a global observer), it is reasonable to account for them while planning. A prominent method that is tailored for such setting is Safe Interval Path Planning (SIPP)~\cite{phillips2011sipp}. This is a search-based algorithm that operates on a graph, which vertices correspond to the configurations of the agent (e.g., position, heading, velocity etc.) and edges correspond to the transitions between them. Due to the dynamic obstacles, certain vertices/edges of this graph are blocked at certain time intervals. SIPP accounts for that and constructs plans in which an agent can wait at the vertices to avoid collisions. An explicit assumption of SIPP is that the agent can start/stop moving instantaneously. Under this assumption the algorithm is provably complete and optimal (w.r.t. the given spatial and temporal discretization).

In our work we adopt a (more realistic from the practical perspective) assumption that decelerating to the full stop and, similarly, accelerating from the full stop takes time. In other words -- the agent is subject to kinodynamic constraints. This makes SIPP inapplicable as shown on Fig.~\ref{fig:visual-abstract}. In this example the environment is discretized to the $3 \times 10$ grid, where the distance between the centers of the grid cells is \SI{1}{\m}, and the agent has to go from the leftmost cell $B0$ to the rightmost one $B9$. Moreover two dynamic obstacles are present in the environment that block cells $B5$ and $B7$ during the intervals shown in black in the bottom part of the figure. Assume, the agent's cruising speed is \SI{1}{\m\per\s}. With this speed it takes \SI{1}{\s} to move between the cells. Moreover, the maximum acceleration/deceleration of the agent is \SI{0.5}{\m\per\s\squared}, which means that accelerating to the cruising speed from the full stop takes \SI{2}{\s} and \SI{1}{\m}. Therefore, when starting to move from the full stop the agent will arrive at the neighboring cell not in \SI{1}{\s} but rather in \SI{2}{\s}. If the algorithm ignores that, like the original SIPP, which assumes instantaneous acceleration, then the constructed plan, shown in red, will actually lead to the collision (if it is applied with real kinodynamic constraints) as shown by the dashed red line. 
A straightforward modification of SIPP that takes accelerating/decelerating actions into account is to apply only dynamically valid transitions, i.e., not allow to wait in the states where the velocity is not zero.
However, this variant will report \emph{failure} to find the solution while producing a partial plan shown in blue. In this plan the agent successfully stops before the first obstacle and waits for \SI{2}{\s} but then is unable to pass the second obstacle as it is being late at arriving at $B7$. The collision-free plan, however, exists. It is shown in green. The reason why SIPP fails to find it is that, intuitively, SIPP postpones the wait actions and cannot reason about the consequences of waiting in different states of the search tree. This does not violate the theoretical guarantees when waiting is available at any state. On the other hand, when waiting is not always readily available, i.e., the agent needs time to stop, the algorithm becomes evidently incomplete. We will elaborate more on this and provide technical details in the following sections of the paper.

To the best of our knowledge, no works on SIPP exist that directly address this issue. As a result, all known SIPP-based algorithms are incomplete in the setting when kinodynamic constraints of the agent have to be taken into account when planning. Our work fills this gap and presents an algorithm that is provably complete and optimal under such constraints -- Safe Interval Path Planning with Interval Projection (SIPP-IP). To empirically evaluate it, we conducted a wide range of experiments in which we compared SIPP-IP to several baselines that include other (non-complete) variants of SIPP one may think of and A*. Empirical results clearly show that SIPP-IP outperforms them all as it is able to solve instances that are unsolvable by other planners and its runtime is two order magnitude lower that the one of A*.

\section{Related Work}

Search-based planning with predictably moving obstacles can be straightforwardly implemented as A*~\cite{hart1968formal} with discretized time. Taking time dimension into account, however, leads to a significant growth of the search space, especially when the fine-grained time discretization is needed. To this end, in~\cite{phillips2011sipp} SIPP was introduced, based on the idea of compressing sequences of time steps into the time intervals and searching over these intervals. SIPP is provably complete and optimal under several assumptions including that the agent can start/stop moving instantaneously. Later numerous variants of SIPP emerged, enhancing the original algorithm, e.g., any-angle SIPP~\cite{yakovlev2021towards}, anytime SIPP~\cite{narayanan2012}, different variants of bounded-supotimal SIPP~\cite{yakovlev2020revisiting}. Moreover SIPP and its modifications are widely used as building blocks of some of the state-of-the-art multi-agent pathfinding solvers~\cite{cohen2019optimal,li2022mapf}. All these variants do not consider kinodynamic constraints.

Next we mention only the works that to some extent deal with taking agent's kinematic and/or kinodynamic constraints into account. In~\cite{ma2019lifelong} SIPPwRT was introduced that allowed planning with different velocities. Still, acceleration actions and effects were not considered in this work. Similarly, in~\cite{yakovlev2019prioritized} any-angle variant of SIPP was described that supported non-uniform velocities. Interestingly, the authors evaluate their planner on real robots. To plan safe trajectories for them they suggested inflating the sizes the moving obstacles, which alternatively can be seen as extending the blocked time intervals of certain graph vertices/edges. This ad-hoc technique was shown to perform reasonably well in practice, but in general it raises the question on how to choose the offset. For example, if one enlarges the blocked intervals in the setting described in the Introduction (recall Fig.~\ref{fig:visual-abstract}) by \SI{1}{\s} SIPP will not find a solution. In~\cite{10.1007/978-3-030-87725-5_1} accelerating actions were straightforwadly integrated into SIPP, which, again, makes the algorithm incomplete (as we show in this work). In~\cite{cohen2019optimal} a variant of SIPP was suggested for the problem setting which assumes arbitrary motion patterns, however this was not the main focus of the paper. Consequently, no techniques were proposed to take special care of the accelerating motions and the empirical evaluation considered only the motions that start and end with zero velocity. Overall, to the best of our knowledge no SIPP variant existed prior to this work that is provably complete and optimal when the assumption of the original SIPP that ``the robot can start/stop instaneously'' does not hold. 

\section{Problem Statement}

We assume a discretized timeline $T= 0, 1, ... $. 
The agent is associated with a graph $G=(V, E)$, with the start and goal vertices: $start, goal \in V$. A vertex of the graph corresponds to the state of the agent, alternatively known as the configuration, in which the agent can reside without colliding with the static obstacles. Each configuration explicitly encompasses the velocity, $vel$, of the agent. E.g., it may be comprised of the agent's coordinates and orientation as well as of its velocity: $v=(x, y, \theta, vel)$. Configurations with the same position/orientation but different velocities, indeed, correspond to different vertices of $V$. 
Configurations with $vel=0$ are of special interest as the agent may stay put (wait) only in them.

An edge $e=(v, u) \in E$ represents a \textit{motion primitive} \cite{pivtoraiko2011kinodynamic} -- a small kinodynamically feasible fragment of the agent's motion that transfers the agent from $v$ to (distinct) $u$. We assume that for any $v \in V$ a finite set of such motions is available and each motion takes integer number of time steps. Practically-wise this assumption is mild, as any real-valued duration may be approximated with a certain precision and represented as an integer. The duration of the motion defines the weight of the edge, $w(e) \in \mathbb{N}$.

Based on the source/target velocity, each motion (edge) can be classified as either accelerating, decelerating or uniform. In the latter case the agent's velocity at the target configuration is the same as in the source one, while in the first two cases it changes (increases, decreases respectively). 
The presence of accelerating/decelerating motions makes our problem distinguishable from the works, which assumed instantaneous acceleration.\footnote{Please note that our problem formulation, as well as the suggested method, is agnostic to how the state variables, e.g., velocity, change while the agent is executing a motion primitive. We are interested in the values of the state variables only at the endpoints of a motion primitive.}

Besides the agent and the static obstacles, a fixed number of moving obstacles populate the environment. We assume that their trajectories are known and converted (by an auxiliary procedure) to the collision and safe intervals associated with the graph vertices and edges, i.e., for each vertex, a finite sequence of non-overlapping time intervals $SI(v)$ is given, which is the sequence of the \emph{safe intervals} at which the agent can be configured at $v$ without colliding with any moving obstacle. Similarly, safe intervals are defined for each edge, $SI(e)$. If the agent executes a move $e$ at any time step outside $SI(e)$, a collision with a moving obstacle occurs.

A (timed) path for the agent, or a trajectory, is a sequence $\pi=(e_1, t_1), (e_2, t_2), ... (e_L, t_L)$, where $t_i \in T$ is the time moment the motion defined by the edge $e_i$ is started. The cost of the trajectory is the time step when the agent reaches the final vertex, $cost(\pi)=e_L + w(e_L)$. A trajectory is \emph{feasible} if the target vertex of $e_{i}$ matches the source vertex of $e_{i+1}$ and $t_{i+1} \geq t_{i} + w(e_i)$ (with $t_0=0$). Moreover if $t_{i+1} > t_{i} + w(e_i)$, then the velocity at the source of $e_{i+1}$ must be zero.

A feasible trajectory can also be seen as a \emph{plan} composed of the move and wait actions, where the move actions are defined by the graph edges and wait actions might occur at the vertices where the agent's velocity is zero. The duration of the wait action occurring after the $i$-th move action is computed as $\delta = t_{i+1} - (t_i + w(e_i))$. As $w(e_i)$ is integer, $\delta$ is integer as well.

A collision between the trajectory $\pi$ and the dynamic obstacles occurs \emph{iff} either of the conditions occur: \emph{i}) the agent starts executing some motion primitive $e$ at the time step which is outside of $SI(e)$; \emph{ii}) according to $\pi$ the agent waits at a vertex $v$ outside of $SI(v)$.

The problem now is to find a feasible collision-free trajectory $\pi$ that transfers the agent from $start$ to $goal$ on a given $G$ (with the annotated safe intervals of vertices/edges). In this work we are interested in solving this problem optimally, i.e., in reaching the goal as early as possible.

\section{Method}

As our method relies on SIPP, which in turn relies on A* with time steps, we first discuss the background and then delve into the details of the suggested approach. We assume that a reader is familiar with vanilla A* for static graphs. 

\subsection{Background}

\paragraph{A* with time steps} The straightforward way to solve the considered problem is to use A* for searching the state space whose nodes are the tuples $n=(v, t)$, where $v$ is the graph vertex and $t$ is the time step by which it is reached. When expanding a node, the successors are generated as follows. First, for each $e=(v,u)$, if $e$ is feasible, the node $n_{move}=(u, t+w(e))$ is added to the successors. Indeed, if the application of a motion $e$ at time $t$ results in a collision, the resulting node is discarded. Second, if $v$ allows waiting then the node corresponding to the wait action of the minimal possible duration of one time step $n_{wait}=(v, t+1)$ is also added to successors. Again, if the safe intervals of $v$ do not cover the time step $t+1$, this node is discarded.

Other parts of the search algorithm are exactly the same as in A*. It is noteworthy that the $g$-values of the nodes equal the time components of their identifiers. I.e., the $g$-value of the node $n=(v,t)$ equals $t$, as this is the minimal known estimate of the cost (w.r.t to the current iteration of the algorithm) form $start$ to $v$.

\paragraph{SIPP} In many scenarios A* with time steps generates numerous nodes of the form $(v, t), (v, t+1), (v, t+2)$ etc., which are created via the use of the atomic wait actions. This leads to the growth of the search tree and slows down the search. To this end, SIPP relies on the idea of compressing the sequential wait actions to reduce the number of the considered search nodes.

The search nodes of SIPP are identified as $n=(v, [lb_j, ub_j])$, where $[lb_j, ub_j]$ is the $j$-th safe interval of $v$.  
For each node SIPP stores the \emph{earliest arrival time}: $t \in [lb_j, ub_j]$. Whenever a lower-cost path to $n$ is found, $t(n)$ is updated and this value serves as the $g$-value of the node similarly to A* with time steps.

When expanding a node, SIPP does not generate any separate successor corresponding to the wait action. Instead, it generates only the successors that correspond to the ``wait and move'' actions that land into the neighboring configurations within their safe intervals. This means that to generate a successor, SIPP waits the minimal amount of time possible so that when the move to the new configuration is used, we arrive in the new safe interval as early as possible. For example, if SIPP performs a move from a node $n=(v, [5, 10])$ to a node $n'=(v', [15, 18])$ and the duration of that move is $5$ and $t(n) = 7$ then $t(n')$ is $15$, meaning that the agent after reaching $n$ at $t=7$ waits there for $3$ time steps and starts moving at $t=10$ and arrives to $n'$ at $t=15$. This move cannot be performed earlier than $t=10$ as in this case it will end outside the safe interval of $n'$.

SIPP is much faster than A* and is provably complete and optimal under the following assumptions: inertial effects are neglected, the agent can start/stop moving instantaneously and, consequently, the wait actions are available at all configurations.

\paragraph{What if the wait actions are not always available}

Intrinsic SIPP assumption that the agent can wait at any configuration often does not hold in practice as many mobile agents cannot start/stop instantaneously. To reflect this the accelerating/decelerating motions should be introduced, as well as the velocity variable should be added to the agent's configuration (as done in our problem statement). In this setting, SIPP is incomplete.

\begin{figure}
  \centering
  \includegraphics[width=0.75\linewidth]{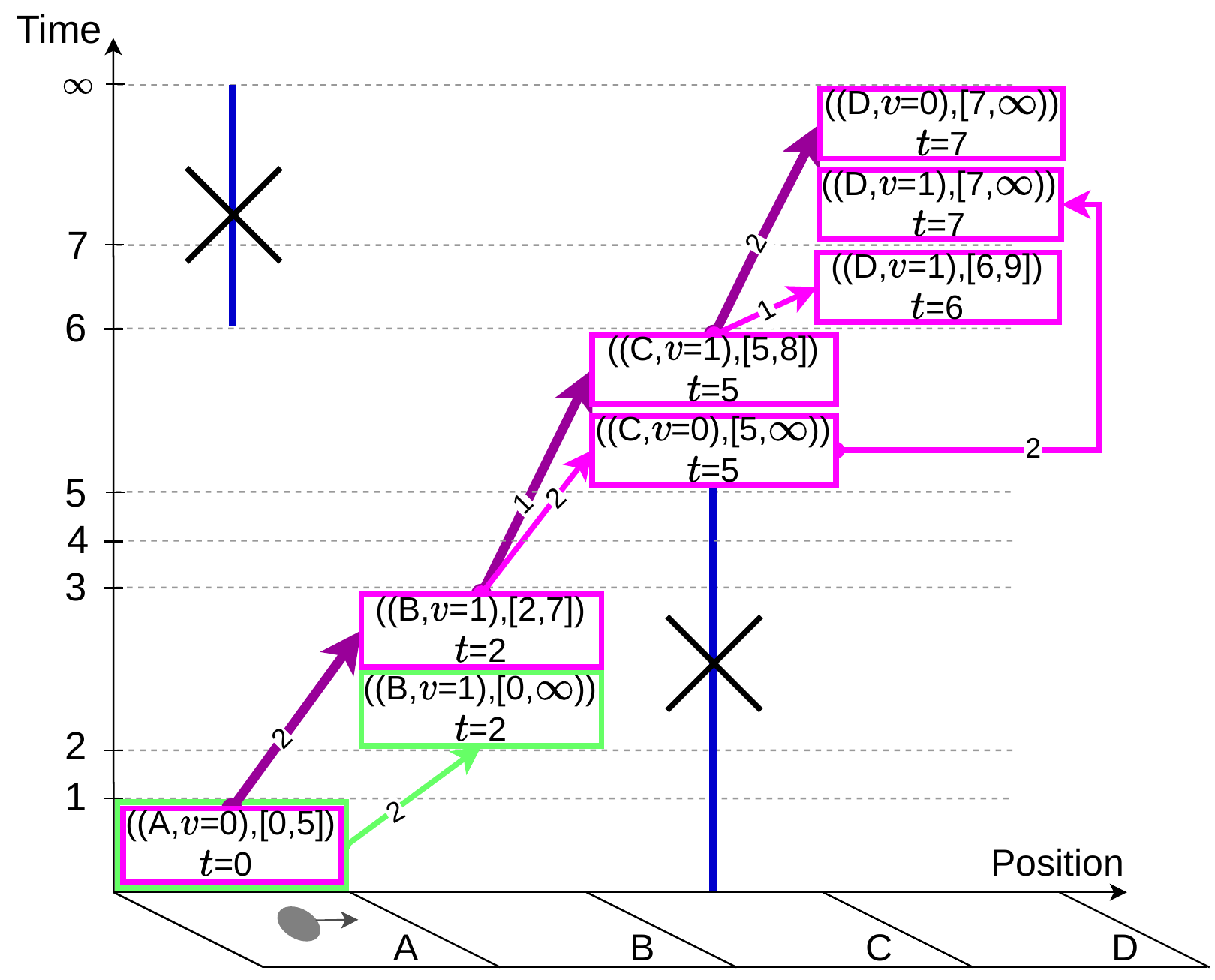}
\caption{
An example where the standard SIPP fails to find a solution, while SIPP-IP succeeds. The search tree of SIPP is shown in green. The search tree of SIPP-IP -- in purple.
}
\label{fig:sipp-fails-sippip-succeds}
\end{figure}

\begin{statement}

SIPP is incomplete when the agent is subject to kinodynamic constraints and wait actions are not available at any configuration.
\end{statement}

\begin{proof}
We prove this statement by presenting an example for which SIPP is not able to find a solution whilst it exists. Consider a problem depicted in Fig.~\ref{fig:sipp-fails-sippip-succeds}. The agent needs to reach cell $D$ from cell $A$. The motion primitives are defined as follows:
\begin{itemize}
    \item (accelerating motion) The agent starts moving from a cell with $vel=0$ and ends with $vel=1$ in the neighboring cell. The cost (duration) is $2$ time steps.
    \item (uniform motion) The agent starts moving from a cell with $vel=1$ and ends with $vel=1$ in the neighboring cell. The cost (duration) is $1$ time step.
    \item (decelerating motion) The agent starts moving from a cell with $vel=1$ and ends with $vel=0$ in the neighboring cell. The cost (duration) is $2$ time steps.
\end{itemize}

Indeed, the agent can wait when $vel=0$. The duration of the wait action is $1$ time step. For the sake of brevity, we ignore the agent's orientation and motions that change it.

SIPP starts with expanding the initial search node $((A, vel=0), [0,5])$ in which only one motion primitive, i.e., the acceleration motion, is applicable. This results in generating the successor node $((B, vel=1), [0, \infty))$ with the arrival time $t=2$, shown in green in Fig.~\ref{fig:sipp-fails-sippip-succeds}. The agent cannot wait in this configuration (as its velocity is not $0$). Thus, it can only continue moving arriving to $C$ either at $t=3$ using the uniform motion action or at $t=4$ using the decelerated motion. Both moves are invalid as they reach $C$ not within its safe interval which starts at $5$. Thus no successors are generated and no search nodes that can be expanded are left in the SIPP's search tree. The algorithm terminates failing to report a solution. The latter, however exists: the agent needs to wait for $2$ time steps at $A$ and then continue moving towards $D$. In this way it will arrive to $C$ at $t=5$ satisfying the safe interval constraint.
\end{proof}

Next, we present a modification of SIPP that is complete and optimal under the considered assumptions.

\subsection{SIPP-IP: Safe Interval Path Planning With (Wait) Interval Projection}

\paragraph{Idea} The main reason why standard SIPP fails to solve planning instances like the one presented above is because the information about possible wait actions is not propagated from predecessors to successors. In the considered example, when achieving the search node $((B, vel=1), [0, \infty))$ SIPP ``forgets'' that the agent can wait in the predecessor and perform the move action at any time step until the end of the predecessor's safe interval. This is not a problem when the agent can wait at any configuration, but leads to incompleteness in the case we are considering.

To this end, we substitute the safe interval as the search node's identifier with another time interval which we refer to as the \emph{waiting interval}. For any search node, this interval belongs to the safe interval of the graph vertex. Indeed, nodes with the same vertex but different waiting intervals should be distinguished. Waiting interval of a search node incorporates the information about all possible wait-and-move actions that can be performed in its predecessor. When a node is expanded, the waiting interval is \emph{projected} forward to all of its successors, thus the information about the possible wait-and-move actions is propagated from the root of the search tree (start node) along all of its branches. We call the modification of SIPP that implements this principle -- Safe Interval Path Planning With (Wait) Interval Projection, SIPP-IP. In the rest of the paper we will refer to the waiting interval of a SIPP-IP node as time interval (or, simply, interval). When talking about the safe intervals of the graph vertices we will never omit ``safe'' to avoid confusion.

\begin{figure}[t!]
  \centering
  \includegraphics[width=0.5\linewidth]{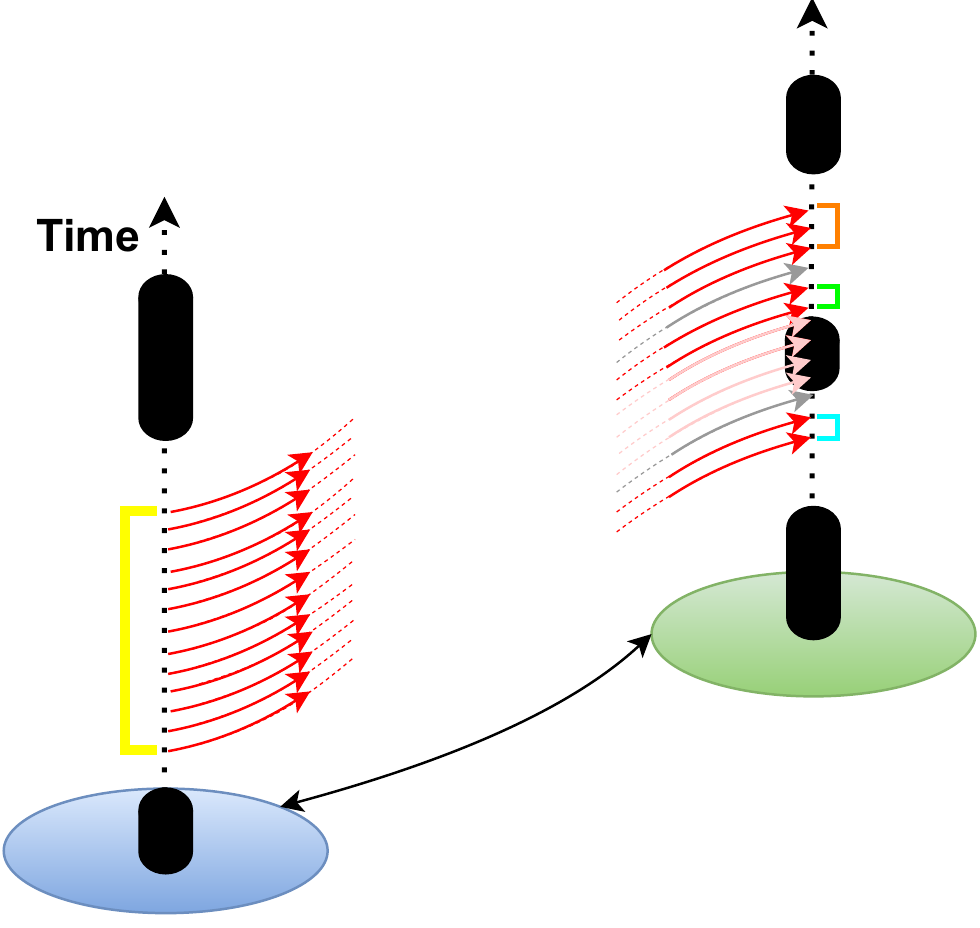}
\caption{Projection of the time interval in SIPP-IP.
}
\label{fig:time-interval-projection-general}
\end{figure}

\paragraph{Projecting intervals}

The role of the projection operation is to propagate the information on all of the available wait-and-move actions from the predecessor to successor. Formally, the input for the projection procedure is a SIPP-IP search node, $n=(v, [t_l, t_u])$, where $[t_l, t_u]$ resides inside one of the safe intervals of $v$, and a graph edge $e=(v, v')$ along which the interval should be propagated. The output is the set of time intervals $TI = \{ ti=[t', t'']\}$, s.t.:
\begin{itemize}
    \item each resultant interval belongs to one of the safe intervals of the target vertex: $\forall  ti_k\in TI~ \exists si \in SI(v'): ti_k \subseteq si$ 
    \item resultant intervals do not overlap: $ ti_k \cap ti_l = \emptyset, \forall k \neq l$
    \item for any transition, specified by $e$, that starts at any time step of the source interval, the time step corresponding to the end of this transition belongs to one of the resultant intervals if the transition is valid (no collisions occur when performing it): $\forall t \in [t_l, t_u]$ s.t. $(e, t)$ is a collision-free transition $\exists \hat{t} \in TI$ s.t. $\hat{t}=t+w(e)$
\end{itemize}

A general example of how projecting interval operation works is depicted in Fig.~\ref{fig:time-interval-projection-general}. The source graph vertex is denoted by the blue oval, the target vertex (where the outgoing edge lands) is shown in green. Black cylinders correspond to the blocked intervals of the vertices. The time interval to be projected is marked in yellow. The resultant time intervals are shown in cyan, green and orange. Observe, that due to the existence of the intermediate unsafe interval in the target vertex the projected time interval is, first, split into the two ones (pink transitions landing in unsafe intervals are ruled out). Second, the transitions that land in the safe intervals of the destination vertex but that lead to a collision in the course of the transition (gray arrows) are also pruned. Thus, the lower time interval is trimmed, while the upper is split into the two ones.

A straightforward technical implementation of the projecting operation may involve sequential application of $e$ to the time steps forming the input interval with further filtering out the invalid transitions and grouping the resultant time steps into the intervals. This resembles A* with time steps, however the resultant atomic search states of A* are compressed into the interval states of SIPP-IP, thus the search tree of SIPP-IP is more compact. \textcolor{blue}{Indeed, more computationally efficient implementations of the projection operation may be suggested, depending on how the collision detection mechanism is specified. 
One of such efficient implementations tailored to the grid-based representation of the problem presented originally in~\cite{cohen2019optimal} and used in our experiments (when each edge is associated with the sequence of cells the agent sweeps while execution the motion) is described in Appendix.} \footnote{\textcolor{blue}{The colored excerpt, as well as Appendix, is not likely to appear in the conference version of the manuscript.}}

To demonstrate how projecting helps in solving instances that were un-solvable for standard SIPP recall the example depicted in Fig.~\ref{fig:sipp-fails-sippip-succeds}. When expanding the start node SIPP-IP will project the time interval of $A$, which is $[0, 5]$ (coincides with the safe interval for start node), to the successor. This will result in $[2, 7]$ interval. When expanding the node $((B, vel=1), [2, 7])$  we can now apply both uniform motion action and decelerating action by trying to commit them at \emph{any} time moment that belongs to the time interval of $(B, vel=1)$ (via the projection operation, again). In such a way, the uniform motion action from $B$ to $C$ will be committed at $t=4$, following the SIPP's principle of reaching the successor as early as possible, i.e., at $t=5$. The projection of the waiting interval will give us $[5, 8]$. Finally, when expanding  $((C, vel=1), [5, 8])$ the node $((D, vel=0), [7, \infty))$ will be generated. When this node will be chosen for expansion the search will successfully terminate reporting finding the solution.

\begin{algorithm}[t!]
\caption{SIPP-IP}
\DontPrintSemicolon
\LinesNumbered
\label{alg:main}
\SetKwProg{Fn}{Function}{:}{}
\SetKw{Continue}{continue}
\SetKwFunction{FfindPath}{findPath}
\SetKwFunction{FgenSucc}{getSuccessors}

\Fn{\FfindPath{$v_{start}, t_{start}, v_{goal}, G(V,E), SI$}}{
    \nl OPEN $\leftarrow$ $\phi$, CLOSED $\leftarrow$ $\phi$\;
    \nl $ti=[t_{start}, t_{start}]$\;
    \nl \If{$v_{start}.vel=0$}{
        \nl $ti.t_u$ $\leftarrow$ upper bound of $SI(v_{start}, ti)$
    }
    \nl $n_{start} \leftarrow (v_{start}, ti)$\;
    \nl $f(n_{start}) \leftarrow t_{start} + h(v_{start})$\;
    \nl Add $n_{start}$ to OPEN\; 
    \nl \While{OPEN $\neq \phi$}{
        \nl $n \gets$ state from OPEN with minimal $f$-value\;
        \nl remove $n$ from OPEN, insert $n$ to CLOSED\; 
        \nl\If{$n.v = v_{goal}$}{
            \nl\Return $\pi$ $\leftarrow$ ReconstructPath($n, n_{start}$)\;
        }
        \nl succ $\leftarrow$ getSuccessors($n, G(V,E), SI$)\;
        \nl\For{each $n'$ in succ}{
            \nl \If{$n'$ in\ CLOSED\ or\ in\ OPEN}{
                \nl \Continue\;
            }
            \nl $f(n') \leftarrow n'.t_l + h(n'.{v})$\;
            \nl Add $n'$ to OPEN \;
        }
    }
    \nl \Return $\phi$
}
\Fn{\FgenSucc{$n, G(V,E), SI$}}{
    \nl SUCC = $\phi$\;
    \nl \For{each $e=(n.v,v')$ in available\ motions}{
        \nl     $intrvls$ = projectIntervals($n, e, SI$)\;
        \nl \If{$v'.vel=0$}{
            \nl \For{each $ti$ in\ $intrvls$}{
                \nl $ti.t_u\leftarrow$ upper bound of $SI(v',ti)$\;
            }
        }
        \nl \For{$ti$ in $intrvls$}{
            \nl insert $(v',ti)$ to SUCC\;
        }
    }
    \nl \Return SUCC\;
}
\end{algorithm}

\paragraph{SIPP-IP description} SIPP-IP starts with forming the start node from the given graph vertex. Initially, start interval contains only the first time step, $t_{start}$, provided as input. If the start vertex allows waiting, i.e., the velocity of the initial agent's configuration is zero, the upper bound of the node's interval is extended to the upper bound of the safe interval (of the start vertex), in which $t_{start}$ resides.

Then SIPP-IP follows the general outline of SIPP/A*. At each iteration it, first, selects the best node from $OPEN$, i.e., the one with the minimal $f$-value. The $f$-value of a SIPP-IP node $n=(v, [t_l, t_u])$ is defined as $f(n)=g(n)+h(n)=t_l+h(v)$. Similarly to SIPP, the $g$-value of the node is the earliest possible time step the agent can arrive an $n$, that is $t_l$. The $h$-value is independent of the time interval and is defined for graph vertices. It estimates the travel time from the current vertex to the goal one (e.g. it equals the straight-line distance between the vertices divided by the maximum speed of the agent). As in SIPP, we assume the heuristic to be admissible and consistent for SIPP-IP.

After selecting a node its successors are generated. For each outgoing edge we apply the projecting operation as described above. As a result for each edge we obtain a set of projected time intervals. If the target configuration allows waiting (the velocity is zero) than we extend the upper bounds of the projected intervals to the upper bounds of the corresponding safe intervals of the target vertex. As a result each of the generated successors implicitly encompasses, at the time interval, the information both about all possible transitions from the predecessor and all possible wait actions in the current node.

Finally each generated successor is inserted to $OPEN$ in case it is not already present in the search tree. 
The algorithm stops when a node corresponding to the goal vertex is extracted from $OPEN$. At this stage the path can be reconstructed. To do so, we go backwards from the goal node and at each iteration do the following. Let $n$ be the current node (initially set to the goal node, $n_{goal}$), $t$ -- the time variable (initially equal to $n_{goal}.t_{l}$), $n_{parent}$ -- parent of $n$, and $e$ the transition between them. If the agent can wait at $n$, then we add the tuple $(e,n.t_{l}-cost(e))$ to $\pi$ and change $n=n_{parent}$ and $t=n.t_{l}-cost(e)$. If the agent cannot wait at $n$ we add $(e,t-cost(e))$ to $\pi$ and change $n=n_{parent}$ and $t=t-cost(e)$. This is repeated until we reach the start node. If the final value $t$ does not match $t_{start}$, then the agent has to wait at start (for $t-t_{start}$ time steps).

The pseudo-code of SIPP-IP is presented in Algorithm 1.

\subsection{Theoretical Properties of SIPP-IP}

According to the definition of SIPP-IP state, we can view each SIPP-IP state as sequence of A*-with-Time-Steps (A*-TS) states. That is $n_{SIPP-IP}(v, [t_{l},t_{u}])$ = $\{s_{A*-TS}(v, t):t\in [t_{l},t_{u}]\}$. We will use this relation along the next proofs.

\begin{theorem}
SIPP-IP is complete.
\end{theorem}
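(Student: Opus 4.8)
The plan is to leverage the stated correspondence between SIPP-IP nodes and sets of A*-with-Time-Steps (A*-TS) states, $n_{SIPP\text{-}IP}(v,[t_l,t_u]) = \{s_{A*\text{-}TS}(v,t):t\in[t_l,t_u]\}$, and to reduce the completeness of SIPP-IP to that of A*-TS. A*-TS enumerates every valid unit-step wait and every valid move, so any feasible collision-free trajectory corresponds to a finite path in its state--time graph; with an admissible heuristic and integer-valued edge weights ($w(e)\in\mathbb{N}$) A*-TS is complete in the standard sense. Hence it suffices to show that SIPP-IP \emph{covers} A*-TS: whenever an A*-TS state $(v,t)$ is reachable by a feasible collision-free trajectory, SIPP-IP generates some node $n=(v,[t_l,t_u])$ with $t\in[t_l,t_u]$.

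I would prove this coverage claim by induction on the number $k$ of move actions in a feasible collision-free trajectory reaching $(v,t)$. For the base case $k=0$ only the start vertex is reachable, and solely by waiting; the start node built in lines 2--5 has interval $[t_{start},t_u]$ extended, when $v_{start}.vel=0$, to the end of the safe interval containing $t_{start}$, which is exactly the set of A*-TS states reachable with no moves. For the inductive step, let $(v',t')$ be reached by a trajectory with $k+1$ moves whose last move is an edge $e=(v,v')$ executed at time $\tau$, so $t'=\tau+w(e)$, and let $\sigma$ be the arrival time at $v$ after the $k$-th move. By the induction hypothesis SIPP-IP generates a node $n=(v,[t_l,t_u])$ with $\sigma\in[t_l,t_u]$. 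I would then split on the velocity at $v$: if $v.vel\neq0$ the trajectory cannot wait at $v$, so $\tau=\sigma\in[t_l,t_u]$; if $v.vel=0$ the interval of $n$ was extended to the end of its safe interval, and since any wait preceding $e$ keeps the agent inside that safe interval we again get $\tau\in[t_l,t_u]$. In both cases $(e,\tau)$ is collision-free, so the third property of the projection operation guarantees that projecting $n$ along $e$ yields a resultant interval containing $\tau+w(e)=t'$; when $v'.vel=0$ the subsequent upper-bound extension (lines 23--25) only enlarges this interval. Thus SIPP-IP generates a node covering $(v',t')$, completing the induction.

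It remains to verify that the duplicate test in lines 15--16 never destroys coverage and that the covering goal node is eventually expanded. For the former I would observe that a generated successor is discarded only when an \emph{identical} node $(v',[t_l,t_u])$ already resides in OPEN or CLOSED; since identical nodes cover the same A*-TS states the witness demanded by the induction still exists in the search tree, and if it sits in CLOSED its successors were already produced, so coverage propagates regardless. For termination, note that the covering goal node has $g=t_l\le t'$ and hence bounded $f$; because each vertex owns only finitely many safe intervals, only finitely many distinct SIPP-IP intervals have a lower bound not exceeding this value, so by the usual A* ordering argument the goal node is extracted after finitely many expansions, at which point line 12 returns a path. Combining coverage with this extraction argument shows that SIPP-IP reports a solution whenever one exists. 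I expect the inductive step---specifically the $v.vel=0$ case, where I must argue that extending the interval to the end of the safe interval faithfully encodes \emph{all} admissible wait durations---to be the main obstacle, together with the bookkeeping needed to confirm that interval fragmentation during projection and duplicate pruning never leaves a reachable A*-TS state uncovered.
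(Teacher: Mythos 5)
Your proposal is correct and follows essentially the same route as the paper: both reduce completeness of SIPP-IP to that of A* with time steps via the correspondence $n_{SIPP\text{-}IP}(v,[t_l,t_u])=\{s_{A*\text{-}TS}(v,t):t\in[t_l,t_u]\}$, arguing that \textit{getSuccessors} (projection plus the upper-bound extension at zero-velocity vertices) generates every A*-TS successor of every covered A*-TS state. Your version is simply a more rigorous elaboration of the paper's sketch, making explicit the induction on the number of moves, the case split on whether waiting is available at the intermediate vertex, and the handling of duplicate pruning and termination, none of which the paper spells out.
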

\begin{proof}
In the presented function \textit{getSuccessors} we try to use all available edges on the input vertex. Then for each edge using the function \textit{projectIntervals} we get the intervals which contain all possible valid timesteps at which we can get at the target node of the edge starting from a timestep in the time-interval of the input state (by definition of \textit{projectIntervals}). This is equivalent to generate all A*-TS states from the A*-TS states included in the input state using an edge. By using all the edges, we generate all A*-TS which can be generated by the "move" action. Directly after that, we check if the wait action is available at the target vertex, then we extend the resulting time-intervals to the upper bound of the safe interval at the target vertex, where the interval is located. This is equivalent to the application of the wait actions on all A*-TS states at the target vertex.
As a result, in \textit{getSuccessors}, all valid A*-TS successors are generated (and capsulated by SIPP-IP states). So, SIPP-IP can be viewed as a modified version of A*-TS which at every iteration it expands several states at the same time and generates all the successors of these states and insert them into OPEN (by definition, all generated A*-TS states are reformed into SIPP-IP states without any loss). As a result SIPP-IP will always generate and expand all A*-TS states, and as A*-TS is complete, SIPP-IP is also complete.
\end{proof}

\begin{lemma}
\label{lemma:astar-state-at-bottom}
The A*-TS state with the minimum $f$-value in a SIPP-IP state $n_{SIPP-IP}(v, [t_{l}, t_{u}])$ is the state $n_{A*-TS}(v,t_{l})$.
\end{lemma}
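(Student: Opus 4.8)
The plan is to exploit the decomposition $n_{SIPP-IP}(v,[t_l,t_u]) = \{s_{A*-TS}(v,t) : t \in [t_l,t_u]\}$ established just before the lemma, together with the explicit form of the $f$-value in A*-TS recalled in the Background. First I would write $f(s_{A*-TS}(v,t)) = g(s_{A*-TS}(v,t)) + h(v)$ and invoke the two facts stated for A* with time steps: that the $g$-value of a node $(v,t)$ equals its time component $t$, and that the heuristic $h$ is a function of the graph vertex alone and is therefore independent of $t$.

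The key observation is that every A*-TS state contained in a single SIPP-IP state shares the same vertex $v$, and hence the same heuristic value $h(v)$. Consequently, restricted to this set, the $f$-value is $f(s_{A*-TS}(v,t)) = t + h(v)$, which is an increasing affine function of the single free parameter $t$, the additive constant $h(v)$ being fixed across the whole set. It then remains to minimize $t + h(v)$ over the admissible range $t \in [t_l,t_u]$; since the expression is monotone increasing in $t$, its minimum is attained at the left endpoint $t = t_l$, i.e. at the state $s_{A*-TS}(v,t_l)$, which is exactly the claim.

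I do not expect a genuine obstacle here: the statement is essentially a restatement of the fact that $g$ equals the arrival time while $h$ is time-agnostic. The only point that deserves an explicit sentence is the justification that $g(s_{A*-TS}(v,t)) = t$; this is the A*-TS invariant recalled in the Background, so I would cite it rather than reprove it. It is also worth noting that neither admissibility nor consistency of $h$ is needed for this particular lemma, as the argument only uses that $h$ does not depend on $t$; those stronger properties will matter later, for the optimality argument. The lemma's real role, which I would flag in a closing remark, is to tie the SIPP-IP ordering of nodes by $t_l$ to the underlying A*-TS ordering by $f$, so that SIPP-IP extracting the minimal-$t_l$ node from $OPEN$ corresponds precisely to A*-TS selecting the minimal-$f$ A*-TS state encapsulated within it.
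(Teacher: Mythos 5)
Your argument is correct and is essentially identical to the paper's own proof: both use the decomposition of a SIPP-IP state into A*-TS states, observe that all of them share the vertex $v$ and hence the value $h(v)$, so $f = t + h(v)$ is minimized at $t = t_l$. The extra remarks (that $g(v,t)=t$ is the A*-TS invariant and that admissibility/consistency are not needed here) are accurate but not part of the paper's proof.
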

\begin{proof}
Recall that $f$-value of a A*-TS state is equal to $n.t+h(n.v)$. As $v$ of all A*-TS states in one SIPP-IP state are identical, therefore the $h$-values of them are equal. As a result, the state with the minimal time i.e., $t_{l}$ is the state with the minimal $f$-value.
\end{proof}
\begin{theorem}
SIPP-IP is optimal.
\end{theorem}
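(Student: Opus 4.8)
The plan is to reduce the optimality of SIPP-IP to that of A*-TS, exactly as the completeness proof did. Recall that A*-TS with an admissible and consistent heuristic is optimal: it processes states in non-decreasing order of $f$-value, and the first time it pops a goal state, that state's $g$-value equals the optimal cost $C^*$. I would exploit the correspondence $n_{SIPP-IP}(v,[t_l,t_u]) = \{ s_{A*-TS}(v,t) : t\in[t_l,t_u]\}$ together with Lemma~\ref{lemma:astar-state-at-bottom}. By that lemma the $f$-value SIPP-IP assigns to a node, $f(n)=t_l+h(v)$, is precisely the minimum $f$-value over the A*-TS states it encapsulates. Since every encapsulated state $(v,t)$ has $t\ge t_l$ and hence $f$-value $\ge t_l+h(v)$, the node on $OPEN$ with minimal $f$-value encapsulates the A*-TS state with the globally smallest $f$-value among all A*-TS states currently represented on $OPEN$. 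Thus, with respect to the ``representative'' bottom states $(v,t_l)$, SIPP-IP advances A*-TS states in the same non-decreasing $f$-order that A*-TS itself uses.

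The core of the argument is then the standard A* invariant, which I would establish by induction on the iterations: before SIPP-IP terminates, $OPEN$ always contains a SIPP-IP node that encapsulates an A*-TS state lying on some optimal trajectory $\pi^*$ and having $f$-value $\le C^*$. The base case holds for $n_{start}$, and the inductive step uses the completeness argument (\emph{getSuccessors} generates every valid A*-TS successor and encapsulates it without loss): whenever the batch being expanded contains the current optimal-path state, the next state along $\pi^*$ is generated and placed on $OPEN$ inside some SIPP-IP node. Granting this invariant, suppose SIPP-IP pops a goal node with interval lower bound $t_l$; since $h(v_{goal})=0$, its $f$-value is $t_l$, and because it was selected as the minimal-$f$ node while an optimal-path node with $f$-value $\le C^*$ sits on $OPEN$, Lemma~\ref{lemma:astar-state-at-bottom} forces $t_l \le C^*$.

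For the reverse inequality I would invoke correctness of the projection and reconstruction machinery: the defining property of \emph{projectIntervals} guarantees that the value $t_l$ attached to the popped goal node is actually attainable by a feasible, collision-free trajectory, and \emph{ReconstructPath} recovers one such trajectory of cost $t_l$; hence $t_l \ge C^*$, giving $t_l = C^*$ and optimality of the returned plan. The main obstacle I anticipate is making the inductive invariant fully rigorous in the presence of two SIPP-IP-specific features: batch expansion (several A*-TS states, including ones A*-TS would expand later, are advanced together) and the absence of reopening (lines~14--15 discard any successor whose node already appears on $OPEN$ or in $CLOSED$). I would lean on the consistency of $h$ to show that when a SIPP-IP node is first extracted its bound $t_l$ is already the earliest arrival time at $v$ within the relevant safe interval, so discarding later duplicate successors never removes the only route to an optimal-path state at its correct arrival time. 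Pinning down precisely how interval identity, the $t_l$ bookkeeping, and duplicate detection interact --- so that no optimal-path representative is ever silently dropped --- is the delicate point on which the rest of the proof hinges.
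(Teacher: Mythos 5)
Your proposal follows essentially the same route as the paper: reduce to A*-TS via the encapsulation relation, use Lemma~\ref{lemma:astar-state-at-bottom} to identify a node's $f$-value with that of its bottom A*-TS state, and run the standard first-goal-popped / minimal-$f$-on-OPEN argument. If anything you are more careful than the paper, which silently assumes the OPEN-invariant you propose to prove by induction, omits the soundness direction $t_l \ge C^*$, and does not address the batch-expansion and duplicate-detection subtleties you correctly flag as the delicate point.
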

\begin{proof}
As SIPP-IP is complete algorithm and the cost (time) is included as identifier in the state, it is guaranteed that the optimal state will be expanded. Therefore, it is sufficient to prove that the first expanded state with the goal vertex is the optimal one i.e., contains the A*-TS state with optimal (minimal) time. Let again view the SIPP-IP states as a sequence of extracted A*-TS states. Let the first expanded SIPP-IP state with goal vertex to be $n_{SIPP-IP}(v_{goal},[t_{l},t_{u}])$ which contains the A*-TS state $n_{A*-TS}(v_{goal},t_{l})$. Let us suppose that $n_{A*-TS}$ is not the optimal A*-TS state but there exists another state $n'_{A*-TS}$ with the minimal $f$-value in OPEN $f(n')$ which is less than $f(n)$. According to Lemma~\ref{lemma:astar-state-at-bottom}, $n'_{A*-TS}$ is located at the bottom of a SIPP-IP state $n'_{SIPP-IP}$ in OPEN i.e., $n'_{A*-TS}.t=n'_{SIPP-IP}.t_{l}$. As in SIPP-IP the states are ordered by the values $f(n_{SIPP-IP})=n.t_{l}+h(n.v)$ which is equal to the $f$-value of the bottom A*-TS state $f(n_{A*-TS})$, the state $n'_{SIPP-IP}$ should have been expanded before the state $n_{SIPP-IP}$ because $f(n'_{A*-TS})<f(n_{A*-TS})$ which result in contradiction. Therefore, there is no A*-TS state with $f$-value less than $f(n_{A*-TS})$ and hence $n_{SIPP-IP}$ is the optimal state.
\end{proof}

\begin{figure*}[t]
    \includegraphics[width=1\linewidth]{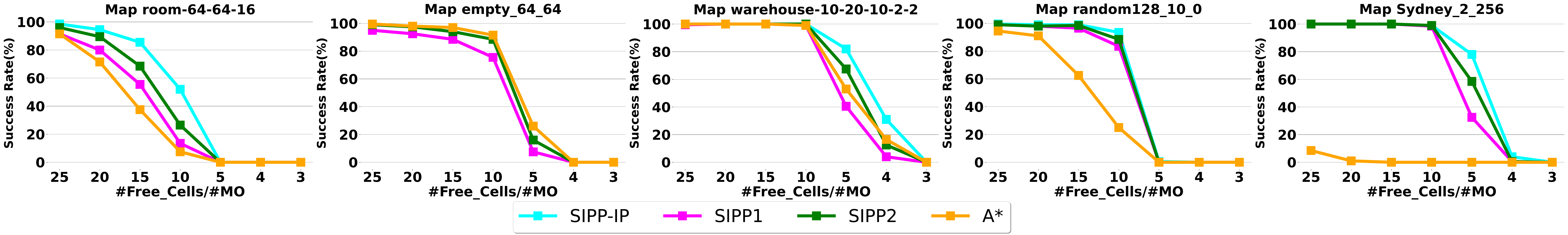}
    \caption{Success Rates of the evaluated algorithms.}
    \label{fig:tests1-solutions-srs}
\end{figure*}

\begin{figure}
\centering
    \includegraphics[width=1\linewidth]{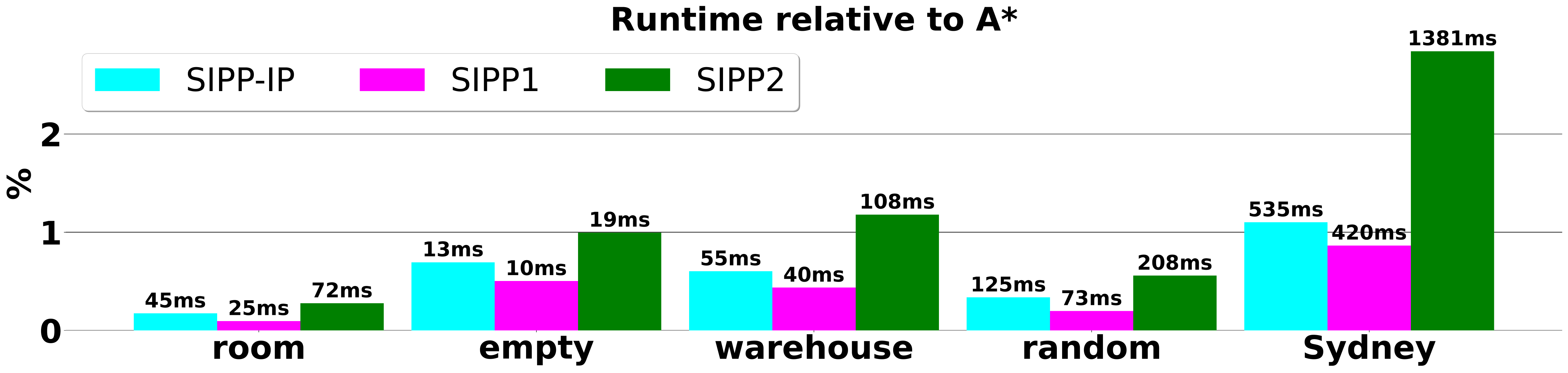}
    \caption{Runtimes of SIPP-IP, SIPP1 and SIPP2 compared to A*. Absolute values (in \SI{}{ms}) are shown above the bars.}
    \label{fig:tests1-solutions-times}
\end{figure}

\section{Empirical Evaluation}

We used five different maps of varying sizes and topologies from the MovingAI benchmark~\cite{sturtevant2012benchmarks} for the experiments: \texttt{empty} (sized 64x64), \texttt{room} (64x64), \texttt{warehouse} (84x170), \texttt{random} (128x128) and \texttt{Sydney} (256x256). Each map was populated with an increasing number of moving obstacles (MOs), and for each number of MOs $200$ different instances were generated that differ in trajectories of MOs. 
These trajectories were generated by randomly assigning start and goal locations for each MO and letting it go from start to goal 
using random speeds. Moreover, MO may wait at any cell for random number of time steps.
For each map we generated instances with the following densities of MOs:  1/25, 1/20, 1/15, 1/10, 1/5, 1/4, 1/3, where the density is the ratio of the number of MOs to the number of the free cells. The start and goal locations of the agent were fixed to the top-left and bottom-right corners of the map, respectively. Please note, that numerous instances in our dataset (especially with high densities of MO) are not generally solvable.

The agent was modeled as disk whose diameter equals the length of the grid cell. The configuration is defined as $(x,y,\theta,vel)$ where $x,y$ are the coordinates of the cell, $\theta\in\{0^\circ,90^\circ,180^\circ,270^\circ\}$ is the orientation and $vel\in\{0,2\}$ is the velocity of the agent.

The following kinodynamic motion primitives were defined: accelerating, decelerating, uniform. Accelerating (decelerating) primitive: move with the fixed acceleration of \SI{0.5}{cell\per\s\squared} (\SI{-0.5}{cell\per\s\squared}) from a configuration with zero (maximum of \SI{2}{cell\per\s}) velocity until reaching maximum (zero) velocity in the direction of the initial orientation. The agent traverses 4 cells this way without changing its orientation. Uniform motion primitive: go with the maximum velocity one cell forward i.e., the initial and final velocities equal \SI{2}{cell\per\s} and the orientation will be the same but the cell will change by one according to the orientation. Additionally, we considered in-place rotations and waiting actions. If the velocity is zero, the agent can rotate \SI{90}{\degree} in \SI{2}{\s} or wait for one time step. The time step is chosen to be \SI{0.1}{\s}.

Collision checking is conservative. We prohibit the agent and any MO from touching the same cell at the same time. Therefore, the blocked intervals of any cell were reserved for the MO when it touches this cell even partially. The agent is allowed to use a motion primitive only if it will not result in touching a cell in a blocked interval. Moreover, if the time-limits of touching some cell for an agent are not integers, they are extended to the closest integers in the safer manner.

We compared SIPP-IP to A* with time steps and two straightforward (not complete) extensions of SIPP: SIPP1 and SIPP2. SIPP1 is a modification of SIPP that generates only kinodynamically-feasible successors for the configurations where velocity is not zero. SIPP2 further allows to re-expand the search nodes (which corresponds to allowing to reach configurations with non-zero velocities in different time steps in the same safe interval). When implementing SIPP1/SIPP2 we used the techniques from SIPPwRT~\cite{ma2019lifelong}, thus the latter can be considered to be included in comparison. The C++ source code of all planners is publicly available. \footnote{\url{https://github.com/PathPlanning/SIPP-IP}}

The experiments were conducted on a PC with Intel Core i7-10700F CPU @ 2.90GHz $\times$ 16 and 32Gb of RAM. We imposed a limit of $100,000,000$ of generated nodes for all algorithms. For each instance we tracked whether the algorithm produced a solution and recorded the algorithm's runtime and the solution cost.

Fig.~\ref{fig:tests1-solutions-srs} presents the Success Rate (SR) plots, where SR is the ratio of the successfully solved instances to all of our test instances. Indeed, all SIPP-IP competitors have lower SR compared to SIPP-IP (except the empty map, where A* and SIPP-IP have the same SR). For SIPP1 and SIPP2 this is explained by their incompleteness. In theory SR of A* should always match the one of SIPP-IP, however in practice the search space of A* is large and it exceeded the imposed limit of $100,000,000$ generated nodes in numerous runs.

For each map we also analyzed the cost of the instances that were successfully solved by SIPP-IP, SIPP1 and SIPP2. The results are presented in Table~\ref{table:costs}. Each cell in the table tells in how many instances (in percent) the cost of the SIPP1/SIPP2 solution exceeded the cost of the optimal solution found by SIPP-IP by a fixed factor. Evidently, in most of instances the costs of competitors are larger than of SIPP-IP's costs and for certain maps (e.g., room) the percentage of the instances where their cost notably exceeds (by more than 50\%) SIPP-IP's cost is significant (up to 42\%).

\begin{table}[t]
\resizebox{\columnwidth}{!}{
\begin{tabular}{ c|c|c|c|c|c|c } 
 &room & empty & warehouse & random &  Sydney & Factor\\ 
 \hline
 SIPP1&93\% &82\% & 69\%&89\% &97\% & \multirow{2}{*}{$>$}\\
 SIPP2&90\%&79\% & 66\%& 85\%&92\%&\\
  \hline
 SIPP1&77\% & 46\% & 16\% & 55\% & 30\% & \multirow{2}{*}{$>5\%$}\\ 
 SIPP2&60\% & 39\% & 11\% & 44\% & 20\%&\\ 
 \hline
 SIPP1&42\% & 7\% & 1\% & 7\% & 4\% & \multirow{2}{*}{$>50\%$}\\ 
 SIPP2&22\% & 3\% & 0\% & 1\% & 3\%&\\ 
\end{tabular}
}
 \caption{Percentage of SIPP1/SIPP2 solutions that have higher costs compared to SIPP-IP.}
 \label{table:costs}
\end{table}

Finally, the algorithms' runtime analysis is presented in Fig.~\ref{fig:tests1-solutions-times}. Indeed all versions of SIPP, including the complete one -- SIPP-IP, are  significantly faster than A*, and reduce the computation time by two orders of magnitude. Moreover SIPP-IP is faster than SIPP2. Still, it is outperformed by SIPP1. This is expected, as SIPP1 exploits a straightforward expansion strategy missing numerous successors SIPP-IP would generate. Absolute (averaged) values of the runtime of SIPP-IP (less than \SI{0.1}{s} on all maps, except the largest one) suggest that the proposed planner can be utilized in real robotic systems, where taking into account kinodynamic constraints may be of vital importance.

\section{Conclusion}
In this work we presented a provably complete and optimal variant of the prominent Safe Interval Path Planning algorithm capable to handle kinodynamic constraints. We showed that straightforward ways to extend SIPP fail in the considered setup, thus a more involved algorithm is needed. The later was presented and analyzed theoretically and empirically.
The directions for future research include embedding the suggested algorithm within the multi-agent path planning solver and conducting experiments on real robots.

\section{Acknowledgments}
This work was partially supported by a grant for research centers in the field of artificial intelligence, provided by the Analytical Center for the Government of the Russian Federation in accordance with the subsidy agreement (agreement identifier 00000D730321P5Q0002) and the agreement with the Moscow Institute of Physics and Technology dated November 1, 2021 No. 70-2021-00138.

\bibliography{main.bib}
\clearpage
\newpage
\appendixpage
\appendix

\paragraph{Grid representation of the environment.} Grids are commonly used to represent the environment in a wide range of applications (robotics, video games, etc.). In our evaluation of SIPP-IP we also used grids. More specifically the environment is represented by a set of grid cells $\mathcal{C}$, where each cell can be either traversable or blocked due to the static obstacles. Each vertex $v$ of the input graph $G=(V, E)$ (i.e. the graph that specifies the agent's possible configurations and the transition between them) is now additionally associated with a set of cells that the agent touches when it is configured at $v$. 
Each edge $e(v,v') \in E$ connecting vertex $v$ with vertex $v'$ is associated with a list (sequence) of cells $(c^e_0, c^e_1, ...)$ that are swept by the agent while execution the motion $e$. Moreover, for each cell $c^e_i$ the time interval $[lb^e_i, ub^e_i]$ is given, during which this cell is swept by the agent while executing the motion primitive (relative to the starting time of the motion). The cost of the edge $cost(e)$ is also given and defined by the duration of the motion primitive.

The safe/un-safe intervals that arise from the moving obstacles are now tied to the grid cells. That is, for each cell $c$ a finite sequence of (non-overlapping) time intervals $SI^c$ is given, which is the sequence of the \emph{safe intervals} at which no obstacle sweeps through the cell. For unification purposes we assign $SI^c$ to $\emptyset$ for the cells occupied by the static obstacles. Inversion of the safe intervals defines the collision intervals for a cell. Thus, the safe interval of the graph vertex $v\in V$ can be defined as the intersection of the safe intervals of the cells that the agent occupies while in $v$. 

Next we present the details of how the \emph{ProjectIntervals} procedure could be efficiently implemented for grids.

\paragraph{Projecting the time intervals in grids.}
Observe the example depicted in Fig.~\ref{fig:time-interval-projection}. Assume we have a SIPP-IP state $n=(v, [2,17])$. Moreover assume that, when configured at $v$, the agent touches only a single cell.\footnote{The projection procedure we are about to describe is agnostic, however, to how many cells the agents touches at an arbitrary configuration.} Consider a motion primitive $e(v, v_{end})=(c^e_0,c^e_1,c^e_2)$:~$[lb^e_0=0,ub^e_0=3],[lb^e_1=2,ub^e_1=4],[lb^e_2=3,ub^e_2=5]$ which means that going by primitive $e$, the agent sweeps through the cell $c^e_0$ from time step $0$ till time step $3$. It will also first touch cell $c^e_1$ at time step $2$ and continue sweeping through this cell for $2$ time steps and first touch cell $c^e_2$ at time step $3$ and continue sweeping $c^e_2$ for $2$ time steps as well. Please note, that the agent might be sweeping trough the several grid cells at the same time step, which fits the implicit assumption that the agent has body/shape. The cost of this motion primitive is $5$, meaning that the agent will arrive at the final configuration, $v_{end}$ after 5 time steps from the start of the move.

\begin{figure}[t!]
  \begin{subfigure}[c]{0.25\columnwidth}
  \centering
    \includegraphics[width=0.95\textwidth]{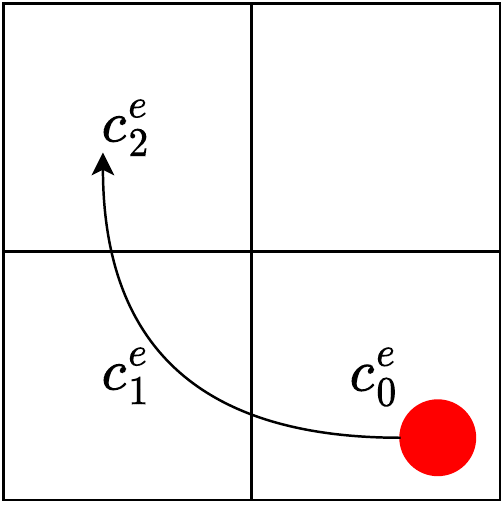}
  \end{subfigure}
  ~
  \begin{subfigure}[c]{0.65\columnwidth}
    \includegraphics[width=0.95\textwidth]{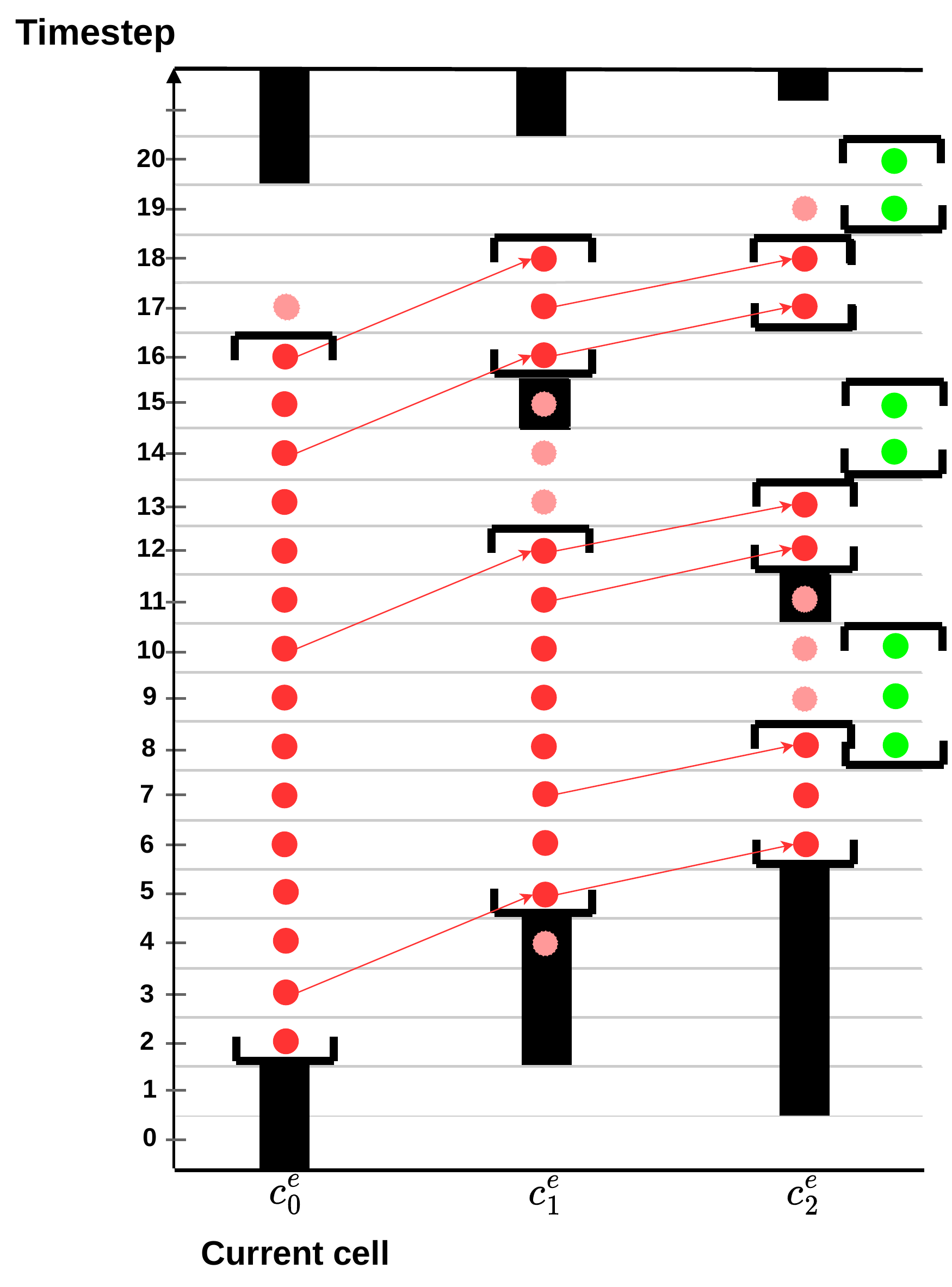}
  \end{subfigure}
\caption{
\emph{Left}: A motion primitive that sweeps through $3$ grid cells. \emph{Right}: Sequential projection of the time interval trough the cells forming the motion primitive.}
\label{fig:time-interval-projection}
\end{figure}

The task is to find the time intervals that contain all the time steps at which the agent can get to $v_{end}$ via executing $e$. Obviously, the naive way is to simulate the move of the agent from each time step in the source interval $[2,17]$ and compute which time steps will survive when going through $(c^e_0, c^e_1, c^e_2)$. Then one can reconstruct the time intervals from these time steps. 

However, we can compute the resultant time intervals in a more efficient way. Taking advantage from the sequential representation of the primitive, we can project the the initial time interval sequentially through the cells forming the primitive until the final cell is reached. Within this process only the manipulations with the endpoints of the time intervals need to be done (not all the time steps that form the time interval). At each step of this sequential cell-to-cell projection we are interested in getting the intervals that 1) contain all time steps at which the agent will first touching the destination cell 2) contain only such time steps that are valid for continuation of the movement via the given primitive. By the latter we mean that moving by the primitive from any time step inside these intervals will not lead to a collision with any un-safe interval of the cell under consideration. To verify that the second condition holds for the start cell of the primitive we technically project the initial time interval on this cell as well (before projecting to the next cells).

Overall, after the sequence of the cell-to-cell projections is performed we end with computing the time intervals of the final cell of the move ($c^e_2$ in our example) at which the agent first touches it. Finally, to get the time steps at which the agent finishes the move, we should shift these time steps to the end of the primitive.

In the considered example (recall Fig.~\ref{fig:time-interval-projection}) we start with projecting the interval $[2,17]$ on cell $c^e_0$. It results in the interval $[2,16]$. Please note that the latter does not coincide with the initial interval. The reason is that the time step $17$ is not a valid time step to start the primitive from. If the motion is started at $t=17$ then at $t=20$ the agent will still be touching $c^e_0$ (as the sweeping interval of $c^e_0$ lasts for $3$ time steps according to $e$). However, this is not valid w.r.t. to given safe intervals of $c^e_0$.

Next, projecting on the next cell $c^e_1$ results in two intervals $[5,12]$ in the first safe interval and $[16,18]$ in the second safe interval.
This means that the agent can first touch $c^e_1$ in the time steps $t\in [5,12] \cup [16,18]$. Time steps $13$ and $14$ are not allowed because the agent will also sweep for 2 time steps through cell $c^e_1$ after it first touches it according to the primitive $e$, so arriving at time step $13$ or $14$ will make the agent collide with the blocked interval, as time step $15$ is blocked at $c^e_1$. 
Next, projecting the interval $[5,12]$ results in interval $[6,8]$ in the first safe interval in $c^e_2$ and interval $[12, 13]$ in the second safe interval. Similarly, projecting the interval $[16, 18]$ on the next cell results in interval $[17,18]$ in the second safe interval of $c^e_2$.
Now we have all the time steps at the moment of touching cell $c^e_2$  which are safely (along all the cells) accessible from the input state. The function is assumed to return the time steps at the end of primitive (not at the moment of touching the last cell), so what we need to do finally is to only shift the resulting time steps by the duration $(cost(e)-lb_{c^e_2})=2$, so the resulting intervals become $[8,10], [14,15]$ and $[19,20]$.
\begin{algorithm}[t!]
\caption{Projecting Intervals}
\DontPrintSemicolon
\LinesNumbered
\label{alg:projectIntervals}
\SetKwProg{Fn}{Function}{:}{}
\SetKwFunction{projectIntervals}{projectIntervals}
\Fn{\projectIntervals{$n, e=(n.v,v'), SI$}}{
\nl$time\_ints \leftarrow \{[n.t_{l}, n.t_{u}]\}$\;
\nl $t \leftarrow 0$\;
\nl\For{each $cell$ in $e.cells$ consecutively}{
\nl    $new\_ints \leftarrow \phi$\;
\nl $\Delta \leftarrow lb^e_{cell}-t$\;
\nl $t \leftarrow lb^e_{cell}$\;
\nl    \For{each $ti$ in $time\_ints$}{
\nl    \For{each $si$ in $SafeIntervals(cell)$}{
\nl $t_{earliest} \leftarrow max(ti.t_{l}+\Delta, lb(si))$\label{alg:line:te}\;
\nl $t_{latest} \leftarrow min(ti.t_{u}+\Delta$, $ub(si)-(ub^e_{cell}-lb^e_{cell}))$\label{alg:line:tl}\;
\nl        \If{$t_{earliest} \leq t_{latest}$\label{alg:line:condition}}{
\nl            insert $[t_{earliest}, t_{latest}] \rightarrow new\_ints$\;
            }
        }
    }
\nl    $time\_ints \leftarrow new\_ints$\;
}
\nl$succ \leftarrow \phi$\;
\nl\For{each $ti$ in $time\_ints$}{
\nl $last\_cell \leftarrow$ the last cell in $e$\;
\nl $\Delta \leftarrow cost(e)-lb^e_{last\_cell}$\;
\nl $ti.t_{l} \leftarrow ti.t_{l}+\Delta$\label{alg:line:extend_to_lower}\;
\nl $ti.t_{u} \leftarrow ti.t_{u}+\Delta$\label{alg:line:extend_to_upper}\;
\nl    insert $(v',ti) \rightarrow succ$\;
}
\nl\Return $succ$\;
}
\end{algorithm}

\paragraph{Pseudocode} The algorithm starts by initializing the resulted time-intervals $time\_ints$ by the input time interval at Line 1. Then we iterate over all the cells of the given primitive sequentially (starting from the initial cell). At each iteration we first initialize a new empty set of time intervals $new\_ints$ to fill by the resulting projected intervals at the current cell $cell$. Then we calculate the time $\Delta$ needed to reach $cell$ from the previous cell or from the cell of the input vertex if $cell$ is the first cell in the primitive (using helper variable $t$). Then we start projecting the currently achieved intervals stored in $time\_ints$ on $cell$. This procedure can be done by finding the earliest time step $t_{earliest}$ and latest time step $t_{latest}$ at each safe interval in $cell$ from each interval in $time\_ints$. 
These time steps can be computed easily knowing the endpoints of the input time interval $ti$ and safe interval $si$ at which we project, as implemented in Lines~\ref{alg:line:te},~\ref{alg:line:tl}. Then if the resulting time steps form a valid interval (condition at Line~\ref{alg:line:condition}), we add it to $new\_ints$. At the end of each iteration we assign the new projected intervals $new\_ints$ to the $time\_ints$ to project them again on the next cells.
After the end of all iterations, we get the time interval of all time steps which we can get at the moment of first touching the last cell in $e$.
The last thing is to shift the resulted time intervals to the end of the primitive (i.e. shift them by the value $cost(e)-lb^e_{last\_cell}$) as can be shown in Lines~\ref{alg:line:extend_to_lower},~\ref{alg:line:extend_to_upper}. After that, we form successor states from the end vertex of $e$, $v'$ with each of the resulting time intervals and return them.

\end{document}